\documentclass{article}

\PassOptionsToPackage{numbers,sort&compress}{natbib}

 \usepackage[preprint]{neurips_2026}

\usepackage[utf8]{inputenc} 
\usepackage[T1]{fontenc}    
\usepackage{hyperref}       
\usepackage{url}            
\usepackage{booktabs}       
\usepackage{amsfonts}       
\usepackage{nicefrac}       
\usepackage{microtype}      
\usepackage{xcolor}         
\usepackage{booktabs,tabularx,array}
\definecolor{ceruleanblue}{rgb}{0.16, 0.32, 0.75}

\hypersetup{
 colorlinks=true,
 citecolor=ceruleanblue,
 linkcolor=ceruleanblue,
 urlcolor=black}

\usepackage{algorithm}
\usepackage{algpseudocode}
\usepackage{multirow}
\usepackage{amsmath}
\usepackage{amssymb}
\usepackage{mathtools}
\usepackage{amsthm}

\usepackage{booktabs}
\usepackage{multirow}
\usepackage{rotating}
\usepackage{adjustbox}
\usepackage[table]{xcolor}

\usepackage{amsthm}
\newtheorem{proposition}{Proposition}

\usepackage{wrapfig}
\usepackage{algorithmicx}
\usepackage{algpseudocode}
\usepackage{caption}

\algrenewcommand\algorithmicrequire{\textbf{Input:}}
\algrenewcommand\algorithmicensure{\textbf{Output:}}

\usepackage{graphicx}
\usepackage{arydshln}  

\usepackage[capitalize,noabbrev]{cleveref} 

\usepackage{bbm}        
\usepackage{colortbl}   
\usepackage{pifont}     
\usepackage{placeins}
\usepackage{bbding} 

\usepackage[most]{tcolorbox}
\usepackage{xcolor}
\usepackage{dashrule}

\definecolor{caseblue}{RGB}{47,95,166}
\definecolor{casebg}{RGB}{255,255,255}

\newtcolorbox{casebox}[1]{
  enhanced,
  breakable,
  colback=casebg,
  colframe=caseblue,
  colbacktitle=caseblue,
  coltitle=white,
  title={#1},
  fonttitle=\bfseries\fontsize{11}{13}\selectfont,
  fontupper=\normalsize,
  boxrule=1pt,
  arc=3mm,
  outer arc=3mm,
  left=12pt,
  right=12pt,
  top=10pt,
  bottom=10pt,
  lefttitle=10pt,
  righttitle=10pt,
  toptitle=4pt,
  bottomtitle=4pt,
  drop fuzzy shadow=black!18!white,
  before skip=10pt,
  after skip=14pt
}

\usepackage[utf8]{inputenc} 
\usepackage[T1]{fontenc}    
\usepackage{url}            
\usepackage{booktabs}       
\usepackage{amsfonts}       
\usepackage{nicefrac}       
\usepackage{microtype}      

 \usepackage{graphicx}
\usepackage{subfigure} 
\usepackage{pifont}
\usepackage{adjustbox}
\usepackage{lipsum}
\usepackage{color}
\usepackage{wrapfig}
\usepackage{booktabs}
\usepackage[textsize=scriptsize]{todonotes}
\usepackage{multirow,mathtools}
\usepackage{threeparttable}
\usepackage{tikz-cd}

\definecolor{lightblue}{rgb}{0.68, 0.85, 0.9}
\definecolor{lightgreen}{rgb}{0.56, 0.93, 0.56}
\definecolor{lightskyblue}{rgb}{0.53, 0.81, 0.98}
\definecolor{non-photoblue}{rgb}{0.64, 0.87, 0.93}
\definecolor{magicmint}{rgb}{0.67, 0.94, 0.82}
\definecolor{mossgreen}{rgb}{0.68, 0.87, 0.68}
\definecolor{salmon}{rgb}{1.0, 0.55, 0.41}
\definecolor{babypink}{rgb}{0.96, 0.76, 0.76}
\definecolor{darkgreen}{rgb}{0, 0.7, 0}

\DeclareMathAlphabet\mathbfcal{OMS}{cmsy}{b}{n}

\usepackage{color, colortbl}
\definecolor{Gray}{gray}{0.93}
\definecolor{Orange}{rgb}{1,0.5,0}
\definecolor{DGray}{gray}{0.83}
\definecolor{LightCyan}{rgb}{0.88,1,1}

\usepackage{dsfont}
\usepackage{enumerate}
\usepackage{graphicx}
\usepackage{threeparttable}

\definecolor{Red}{rgb}{0.6,0,0}
\definecolor{Blue}{rgb}{0,0,0.8}
\definecolor{Green}{rgb}{0,0.6,0.9}
\definecolor{airforceblue}{rgb}{0.36, 0.54, 0.66}
\definecolor{ao(english)}{rgb}{0.0, 0.5, 0.0}
\definecolor{azure(colorwheel)}{rgb}{0.0, 0.5, 1.0}
\definecolor{crimson}{rgb}{0.86, 0.08, 0.24}
\definecolor{darkcerulean}{rgb}{0.03, 0.27, 0.49}
\definecolor{cobalt}{rgb}{0.0, 0.28, 0.67}
\definecolor{rosegold}{rgb}{0.72, 0.43, 0.47}
\definecolor{orange-red}{rgb}{1.0, 0.27, 0.0}
\definecolor{mountainmeadow}{rgb}{0.19, 0.73, 0.56}
\definecolor{malachite}{rgb}{0.04, 0.85, 0.32}
\definecolor{darkblue}{rgb}{0.0, 0.0, 0.55}
\definecolor{customred}{rgb}{1, 0.85, 0.85}
\definecolor{customcitecolor}{rgb}{0.7, 0.5, 1}
\definecolor{custompink}{rgb}{0.8, 0.3, 0.3}
\definecolor{customgreen}{rgb}{0.3, 0.8, 0.3}
\definecolor{Lightgreen}{rgb}{0.8, 1, 0.9}
\definecolor{LightCyan}{rgb}{0.8, 0.9, 1}
\definecolor{mygray}{gray}{0.8}

\usepackage[most]{tcolorbox}
\newtcolorbox{mybox}[2][]{%
  attach boxed title to top center
               = {yshift=-8pt},
  colback      = Gray,
  colframe     = black,
  fonttitle    = \bfseries,
  colbacktitle = white,
  title        = #2,#1,
  enhanced,
}

\DeclarePairedDelimiterX{\inp}[2]{\langle}{\rangle}{#1, #2}

\makeatletter
\newcommand*{\rom}[1]{\expandafter\@slowromancap\romannumeral #1@}
\makeatother
\newcommand{\mycomment}[1]{}

\usepackage{pifont}

\usepackage{color, colortbl}
\definecolor{Gray}{gray}{0.93}
\definecolor{Orange}{rgb}{1,0.5,0}
\definecolor{DGray}{gray}{0.83}
\definecolor{LightCyan}{rgb}{0.88,1,1}

\definecolor{WarnREd}{rgb}{1,0.4,0.4}
\definecolor{WarnOrange}{rgb}{1,0.682,0.502}
\definecolor{WarnPink}{rgb}{0.9176, 0.7215, 0.7215}
\definecolor{GoodGreen}{rgb}{0.5019, 0.9215, 0.6039}
\definecolor{deepgreen}{RGB}{0,128,0}   
\definecolor{ochreyellow}{RGB}{204,119,34} 
\definecolor{GoodGreen}{rgb}{0.5019, 0.9215, 0.6039}
\definecolor{NiceYellow}{rgb}{0.98, 0.92, 0.60}

\usepackage[T1]{fontenc}

\definecolor{styleblue}{HTML}{504099}
\definecolor{mypurple}{HTML}{9391ff}
\usepackage{wrapfig}

\usepackage{multirow,mathtools }

\usepackage{pifont}
\usepackage{color, colortbl}

\usepackage{blindtext}
\usepackage{lipsum}

\usepackage{multirow}
\usepackage{listings}

\usepackage{bbm}

\usepackage [english]{babel}
\usepackage [autostyle, english = american]{csquotes}
\usepackage[nottoc]{tocbibind}

\usepackage{pifont}
\usepackage{url}
\usepackage[most]{tcolorbox}

\usepackage{lipsum}
\usepackage{soul}
\usepackage{xcolor}
\usepackage{wrapfig}
\usepackage{multirow,mathtools } 

\usepackage{adjustbox}
\MakeOuterQuote{"}

\usepackage{microtype}
\usepackage{graphicx}
\usepackage{booktabs} 

\usepackage{amsmath}
\usepackage[capitalize,noabbrev]{cleveref}
\usepackage{tablefootnote}

\usepackage{amsmath,amsfonts,bm,nicefrac}

\def\eqref#1{Eq.~\ref{#1}}

\def\1{\bm{1}}

\DeclareMathAlphabet{\mathsfit}{\encodingdefault}{\sfdefault}{m}{sl}
\SetMathAlphabet{\mathsfit}{bold}{\encodingdefault}{\sfdefault}{bx}{n}

\newcommand{\be}{\begin{eqnarray} \begin{aligned}}
\newcommand{\ee}{\end{aligned} \end{eqnarray} }
\newcommand{\benn}{\begin{eqnarray*} \begin{aligned}}
\newcommand{\eenn}{\end{aligned} \end{eqnarray*} }

\title{SAGE: Retain-Aware Post-Hoc Sanitization \\ of Final Unlearning Vector}

\author{%
\begin{tabular}{c}
Jingyuan Zhang\textsuperscript{1,\textdagger} \quad
Yucheng Bai\textsuperscript{1,\textdagger} \quad
Peixi Wen\textsuperscript{1} \quad
Zhehao Huang\textsuperscript{1} \\
Zhengbao He\textsuperscript{1} \quad
Hanling Tian\textsuperscript{1} \quad
Xinwen Cheng\textsuperscript{1} \quad
Haiyin Ran\textsuperscript{1} \quad
Xiaolin Huang\textsuperscript{1,\Envelope} \\
\textsuperscript{1}Institute of Image Processing and Pattern Recognition, Shanghai Jiao Tong University \\
\end{tabular}
}

\makeatletter
\renewcommand{\@noticestring}{%
Preprint. \quad
\textsuperscript{\textdagger}Equal contribution \quad
\textsuperscript{\Envelope}Corresponding author}
\makeatother

\begin{document}

\maketitle

\begin{abstract}

Large Language Model (LLM) unlearning aims to remove undesirable knowledge or behaviors while preserving retained capabilities. 
Current unlearning methods all involve a trade-off between unlearning and retention. We have found that the retention activation bias can also be used to quantify the damage an unlearning method inflicts on retention, without considering the specific implementation of the unlearning process. This allows us to restore retention performance for any unlearning method using a post-hoc approach.
Therefore, we propose a complementary post-hoc setting to sanitize the final update vector without rerunning the original unlearning pipeline. In this setting, we design \textbf{SAGE}, \textbf{S}pectral \textbf{A}ctivation-\textbf{GE}ometry Sanitization, a source-agnostic correction for final unlearning updates. SAGE collects real module inputs from a small retain proxy, extracts their dominant activation geometry, and solves a source-anchored optimization objective in closed form, which suppresses update components aligned with high-energy retained directions while preserving the source method's forgetting carrier.
Across multiple unlearning methods, model scales, and benchmarks, SAGE consistently relieves the retain--forget trade-off, identifying post-hoc sanitization of final vectors as a practical and underexplored axis for machine unlearning.

\end{abstract}

\section{Introduction}
\label{sec:intro}

\begin{wrapfigure}{r}{0.50\linewidth}
    \vspace{-3.5\baselineskip}
    \centering
    \captionsetup{font=footnotesize}
    \includegraphics[width=\linewidth]{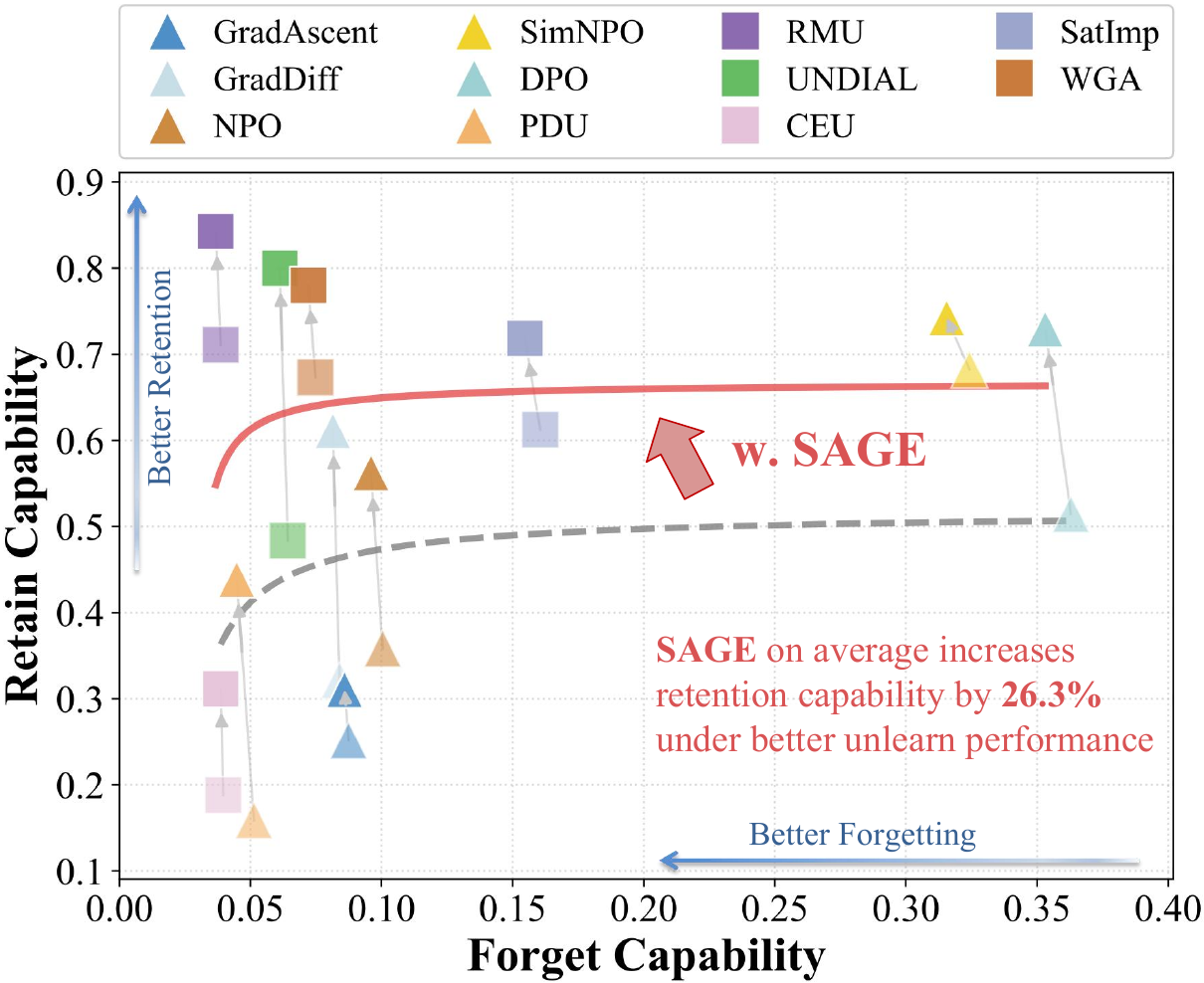}
    \caption{Overall Performance. Triangles denote loss-driven methods and squares denote constraint-guided methods. Points are averaged over multiple models and forgetting difficulties. Light markers and gray line fit original baselines; dark markers and red line fit methods \textbf{with SAGE}.}
    \label{fig:parato_main}
    \vspace{-0.8\baselineskip}
\end{wrapfigure}

Large language models (LLMs)~\citep{bai2023qwen,grattafiori2024llama3herd,touvron2023llama,wolf2020transformers,yang2025qwen3} can memorize private information, copyrighted content, harmful behaviors, and other undesirable data patterns from pretraining and fine-tuning corpora~\citep{nicholas2021extracting,liu2024rethinking,wang2025wga,jang2022knowledge,yao2024llmunlearning}. 
This has made machine unlearning an increasingly important problem for safety, privacy, and regulatory compliance~\citep{li2024wmdp,yao2024surveyllm,zhang2024rightforgotten}. 
While exact retraining remains the cleanest deletion target in classical machine unlearning, it is often computationally prohibitive at modern model scales, motivating efficient approximate alternatives~\citep{bourtoule2020mu,neel2020descenttodelete}. 
For LLMs, it is more challenging because knowledge is encoded in highly distributed representations learned from large-scale corpora, making it difficult to remove targeted information without inducing widespread collateral damage to general capabilities~\citep{brown2020languagemodelsfewshotlearners,touvron2023llama}.

Recent work has produced a wide range of approximate LLM unlearning methods, including gradient and objective-based~\citep {maini2024TOFU,wang2024FLAT,fan2025SimNPO,entesari2025PDU,yang2025CEU,zhang2024NPO}, preference-style optimization~\citep{dong2024UNDIAL,mekala2024AltPO}, representation-level or localized-parameter interventions~\citep{li2024wmdp,shen2025LUNAR}, loss-reweighting methods~\citep{wang2025wga,yang2025SatImp}, and task-vector methods~\citep{ilharco2023taskarithmetic,dong2025TSA,cai2026perta,kim2025negmerge}. 
Furthermore, benchmarks such as TOFU~\citep{maini2024TOFU}, MUSE~\citep{shi2024MUSE}, and WMDP~\citep{li2024wmdp} have made clear that forgetting must be balanced against leakage, utility, and retained behavior.
Based on them, many lightweight interventions, ranging from inference-time control~\citep{huang2025offset,pawelczyk2024ICU,Liu2024ECO,bhaila2024SPUL} to training-time plug-ins~\citep{wang2025GRU,zhou2026GU}, have also emerged. 
Although they have already shown that adding a lightweight correction mechanism can be an effective way to further improve unlearning, most of them remain tightly coupled to specific stages of the original pipeline: training-time plug-ins must be incorporated during optimization and rely on rerunning or modifying the original unlearning process.
Part (d) in Figure~\ref{fig:overview} shows that during source unlearning, retain activation bias rises sharply while retained ability drops substantially, even after the forgetting strength has largely stabilized. 
This suggests that the final unlearning update is still not fully retain-aware: the deployed update continues to contain components that disproportionately perturb retained activation directions, indicating that there remains meaningful room to improve retention by sanitizing the completed update itself.
This motivates a decoupled post-hoc setting, in which the object of correction is no longer the training dynamics, but the final vector itself. 
We try to construct a sanitized vector to better trade off forgetting and retention, without revisiting the original unlearning process.

\begin{figure}[t]
    \centering
    \includegraphics[width=1.0\linewidth]{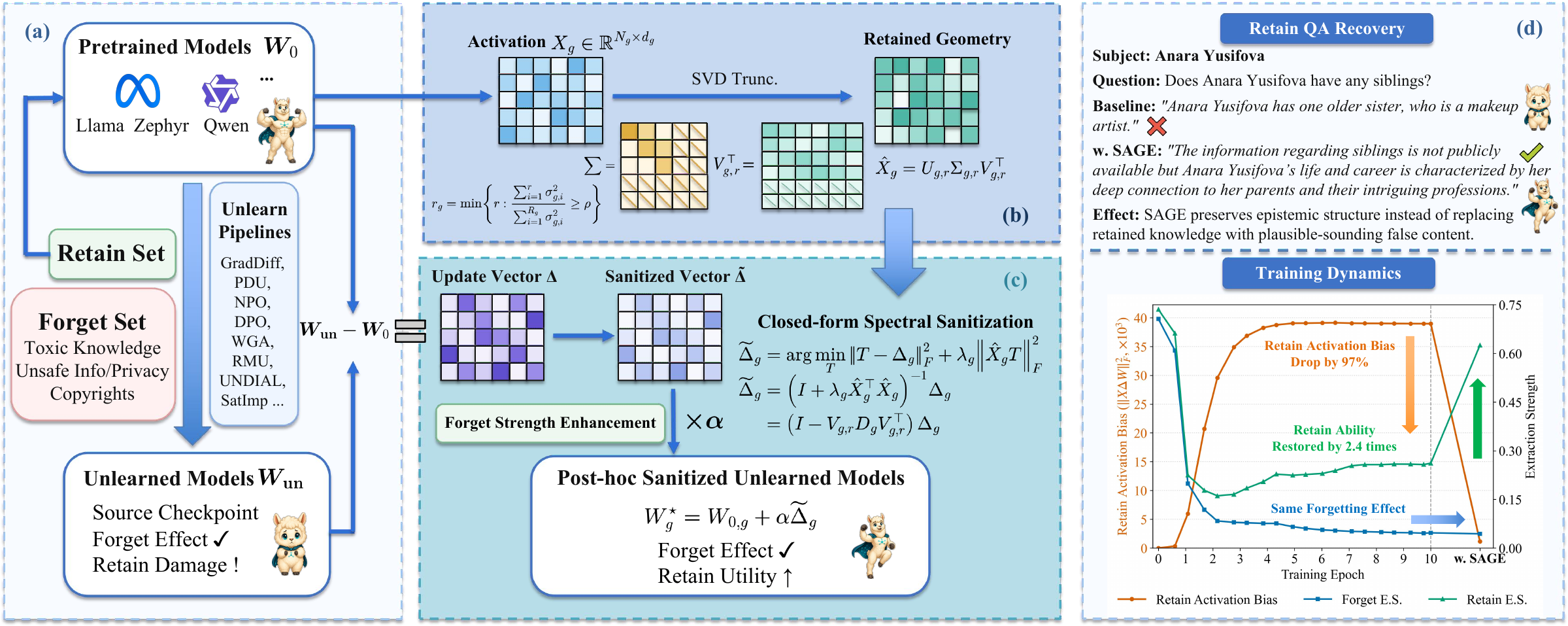}
    \caption{
    Overview of the proposed SAGE method: (a) Acquire post-hoc update vector from unlearning baselines; (b) Construct stable and denoised retain geometry from module-level input activations; (c) Apply closed-form spectral sanitization and amplifier to get unlearned models ; (d) Demonstration of retain restoration on a QA sample and drop on retain activation bias.
    }
    \label{fig:overview}
    \vspace{-2.0\baselineskip}
\end{figure}

Therefore, we propose \textbf{SAGE}, \textbf{S}pectral \textbf{A}ctivation-\textbf{GE}ometry Sanitization, a post-hoc sanitizer for final unlearning vectors.
To construct a retained activation basis, we sample a small retain calibration set and collect module-level input activations through no-gradient forward passes.
To reduce the output perturbation by minimizing the update response energy on the retained input geometry, we apply truncated Singular Value Decomposition (SVD) to identify a stable and denoised low-rank subspace.
As the source unlearning method has already been optimized for the forget objective, its final parameter displacement is the most direct forgetting carrier available in the post-hoc setting.
SAGE therefore performs source-anchored sanitization: it keeps the sanitized update close to the original source update to preserve the learned forgetting signal, while suppressing the components that produce large retained-activation responses.
In the objective's closed-form solution, directions with larger retained singular energy are attenuated more strongly, whereas update components outside the dominant retained subspace are largely preserved.
Finally, SAGE applies an amplifier to ensure effective forgetting, following the task-vector view of model editing and unlearning, where the direction of a parameter displacement determines the type of behavioral change and its magnitude controls the strength of that change~\citep{ilharco2023taskarithmetic,cai2026perta}.

Empirically, SAGE consistently improves the retain-forget trade-off across diverse source unlearning methods, model scales, and forget ratios, without rerunning the original unlearning pipeline. On TOFU, it increases average retention capability by 26.3\% with better unlearn effect. These improvements remain stable from 1B to 8B models and become more pronounced as the amount of content to be forgotten increases. What's more, SAGE improves model utility by 2.2\% and reduces privacy leakage by 6.2\% on average. 
Its benefits further transfer beyond structured QA-style forgetting. 
SAGE improves retention capabilities by about 39.8\% and 5.2\% on MUSE and WMDP-cyber respectively, while largely preserving forgetting behavior. 
Notably, for TOFU dataset, SAGE uses roughly 3\% of the full retain set, and remains robust even under smaller proxy budgets. 
Together, these results suggest that post-hoc sanitization of final unlearning updates is a practical and underexplored design axis for machine unlearning.
Our main contributions are as follows:

\begin{itemize}
    \item We study a practical post-hoc unlearning setting to reduce retain-side collateral damage in which the final parameter update of a source unlearning method is sanitized after training, without rerunning or modifying the original pipeline.

    \item We propose \textbf{SAGE}, a module-wise closed-form sanitizer that builds retained activation geometry from a small retain set and applies a singular-value-aware soft spectral operator to suppress retained-response directions while preserving forgetting effect.

    \item Across multiple source unlearning methods, model scales, and benchmarks, we show that SAGE consistently improves the retention capability, while also improving utility and privacy leakage, with less time and computation resources.
\end{itemize}

\section{Related Work}

\subsection{LLM Unlearning}

Machine unlearning for LLMs is motivated by the need to remove private, copyrighted, or hazardous knowledge without retraining from scratch~\citep{jang2022knowledge,yao2024llmunlearning}. Since exact deletion is generally infeasible for modern LLMs, recent work has focused on approximate unlearning together with evaluation protocols that jointly measure removal and preservation. Benchmarks such as TOFU~\citep{maini2024TOFU}, WMDP~\citep{li2024wmdp}, MUSE~\citep{shi2024MUSE} and OpenUnlearning~\citep{dorna2025openunlearning} have made the forget--retain trade-off a central concern, showing that effective unlearning should evaluate together with retained behavior, utility and leakage.

Most existing methods improve this trade-off during optimization. Gradient- and objective-based approaches modify the forget objective directly, including gradient ascent~\citep{maini2024TOFU}, NPO~\citep{zhang2024NPO}, and SimNPO~\citep{fan2025SimNPO}, while representation-level methods such as RMU~\citep{li2024wmdp} and LUNAR~\citep{shen2025LUNAR} intervene on internal activations or hidden states to suppress unwanted knowledge. More recently, lightweight retain-aware plug-ins have been proposed: GRU~\citep{wang2025GRU} rectifies retention-damaging update directions during training, and GU~\citep{zhou2026GU} removes components aligned with the retain-gradient subspace. Other lightweight methods, such as offset unlearning~\citep{huang2025offset}, in-context unlearning~\citep{pawelczyk2024ICU}, and soft prompting~\citep{bhaila2024SPUL}, reduce access requirements by acting at inference time rather than on model weights. In contrast, SAGE operates after training is complete and directly sanitizes the final deployed weight update, without modifying the original optimization loop.

\subsection{Weight-Space Unlearning and Preservation}

Our work is also related to methods that treat parameter differences as editable weight-space objects. Task Arithmetic~\citep{ilharco2023taskarithmetic} shows that fine-tuning deltas can act as task vectors whose addition, subtraction, or scaling steers model behavior without further training. However, single-vector unlearning is sensitive to fine-tuning configuration, scaling, and candidate selection. Task Simplex Arithmetic~\citep{dong2025TSA} and NegMerge~\citep{kim2025negmerge} improve robustness through multi-vector aggregation. PerTA~\citep{cai2026perta} uses gradient or diagonal-Fisher estimates to rescale task vectors, addressing over-forgetting. These methods improve vector selection, merging, or parameter-wise merging to acquire a stable or retain-friendly unlearning vector. SAGE instead accepts final vectors and sanitizes it using module-level retain activation geometry.

A related preservation-oriented literature comes from model editing. ROME~\citep{meng2022ROME} edits factual associations through localized rank-one updates, and MEMIT~\citep{meng2023MEMIT} extends this paradigm to many edits by distributing updates across layers. AlphaEdit~\citep{fang2024AlphaEdit} reduces collateral damage by projecting edit perturbations into the null space of preserved knowledge keys. This literature shares our emphasis on protecting unaffected knowledge, but it mainly targets factual editing and often relies on hard preservation constraints.
By contrast, unlearning produces a broader final update that must reduce unsafe knowledge without unnecessarily harming retained behavior. SAGE brings this preservation perspective into unlearning by correcting the final update vector itself, reducing the components that disproportionately disturb retained behavior while preserving the original unlearning effect.

\section{Method}
\label{sec:method}

\subsection{Problem Setup}
\label{sec:method_setup}

Existing unlearning methods often differ substantially in their training objectives, optimization procedures, and access assumptions, yet they share a common challenge: improving forgetting typically comes at the cost of retained capabilities. 
As shown in Figure~\ref{fig:overview}, retain-side collateral damage can be reflected by retain activation bias, independent of implementation details of unlearning algorithms.
Therefore, we propose a post-hoc final-update setting for LLM unlearning, providing a global correction targeted at accumulated updates instead of step-wise training-time control.

Given a base model \(W_0\) and a source unlearned model \(W_{\mathrm{un}}\), we get the final vector from the pretrained model to the unlearned model.
Rather than editing all parameters uniformly, we restrict SAGE to a structured set of editable modules \(\mathcal{G}\), comprising the attention and MLP projection matrices, as they admit a meaningful input-side activation geometry. For each module \(g\), we define source update as
\begin{equation}
\Delta_g=
\begin{cases}
W_{\mathrm{un},g}-W_{0,g}, & g\in\mathcal{G},\\[3pt]
0, & g\notin\mathcal{G}.
\end{cases}
\label{eq:editable_delta}
\end{equation}

\subsection{Retained Activation Geometry}
\label{sec:method_geometry}

Retain-side damage is not determined solely by the norm of the source update. More importantly, it depends on whether the update acts strongly along the dominant input directions associated with retained capabilities. To capture this structure, we run a no-gradient forward pass on a small retain calibration proxy \(D_{\mathrm{cal}}\) for each editable module \(g\) and collect the module's real input activations into $X_g \in \mathbb{R}^{N_g \times d_g}$, where \(d_g\) is the module input dimension and \(N_g\) is the total number of collected token-level inputs.

We then compute the singular value decomposition $X_g = U_g \Sigma_g V_g^\top$, and retain the top \(r_g\) singular directions according to the cumulative-energy criterion. The resulting truncated operator emphasizes dominant retained directions while discarding low-energy and probable noisy components, providing a stable geometry for the post-hoc sanitizer and avoiding overfitting. This yields a truncated retained-geometry operator
\begin{equation}
\hat X_g := U_{g,r}\Sigma_{g,r}V_{g,r}^\top,
\qquad
\hat C_g := \hat X_g^\top \hat X_g
= V_{g,r}\Sigma_{g,r}^2V_{g,r}^\top.
\label{eq:denoised_geometry}
\end{equation}
Here, \(\hat X_g\) captures the dominant retained input geometry of module \(g\), while \(\hat C_g\) is the corresponding input-side Gram operator. In particular, \(\hat C_g\) encodes the relative importance of retained input directions and will serve as the geometry-aware weighting operator in the sanitizer below.

\subsection{Closed-Form Spectral Sanitization}
\label{sec:method_sanitization}

In the post-hoc setting, the source method’s final parameter displacement is the only directly available carrier of the forgetting effect achieved during unlearning. Accordingly, the sanitizer should remain close to the source update and only correct the part that is most harmful to retained behaviors.

To this end, for each module g, SAGE optimizes a source-anchored objective with two complementary terms: a proximity term that keeps the sanitized update close to the source displacement, and a response penalty that suppresses update directions introducing output perturbation on retained activation geometry. For module \(g\), let \(m_g\) denote the output dimension, and write the corresponding update in operator form as \(\Delta_g \in \mathbb{R}^{d_g \times m_g}\). Concretely, we solve

\begin{equation}
\widetilde{\Delta}_g
=
\arg\min_T
\left\|T-\Delta_g\right\|_F^2
+
\lambda_g \left\|\hat X_g T\right\|_F^2 .
\label{eq:main_objective}
\end{equation}

The first term preserves the learned forgetting signal already encoded in the source update, while the second term penalizes retain-harming directions. Without the source anchor, minimizing only the response term would collapse to the trivial zero update.

\begin{wrapfigure}{r}{0.50\linewidth}
    \vspace{-0.8\baselineskip}
    \footnotesize
    \noindent\rule{\linewidth}{0.8pt}
    
    \vspace{2pt}
    \refstepcounter{algorithm}
    \noindent\textbf{Algorithm \thealgorithm} SAGE Framework
    \label{alg:sage}
    
    \noindent\rule{\linewidth}{0.5pt}
    
    \begin{algorithmic}[1]
        \State \textbf{Input:} base model \(W_0\), source model \(W_{\mathrm{un}}\), editable modules \(\mathcal{G}\), retain calibration proxy \(D_{\mathrm{cal}}\), regularization strengths \(\{\lambda_g\}_{g\in\mathcal{G}}\), energy threshold \(\rho\), calibrated \(\alpha\)
        \State \(\Delta \gets W_{\mathrm{un}} - W_0\)
        \For{each \(g \in \mathcal{G}\)}
            \State collect retained module inputs \(X_g\) from \(D_{\mathrm{cal}}\)
            \State compute \(X_g = U_g \Sigma_g V_g^\top\)
            \State choose \(r_g\) by energy threshold \(\rho\)
            \State \(\hat X_g \gets U_{g,r}\Sigma_{g,r}V_{g,r}^\top\)
            \State \(\hat C_g \gets \hat X_g^\top \hat X_g\)
            \State \(\widetilde{\Delta}_g \gets (I - V_{g,r} D_g V_{g,r}^\top)\Delta_g\)
        \EndFor
        \State choose \(\alpha\) by matched-forgetting calibration
        \For{each module \(g\)}
            \If{\(g \in \mathcal{G}\)}
                \State \(W_g^\star \gets W_{0,g} + \alpha \widetilde{\Delta}_g\)
            \Else
                \State \(W_g^\star \gets W_{0,g}\)
            \EndIf
        \EndFor
        \State \textbf{Return} \(W^\star\)
    \end{algorithmic}
    
    \noindent\rule{\linewidth}{0.5pt}
    \vspace{-4\baselineskip}
\end{wrapfigure}

\begin{proposition}[Unique closed-form sanitizer]
\label{prop:closed_form}
For each editable module \(g\) and any \(\lambda_g \ge 0\), the objective in ~\eqref{eq:main_objective} is strongly convex in \(T\) and admits the unique minimizer
\begin{equation}
\widetilde{\Delta}_g
=
\left(I+\lambda_g \hat C_g\right)^{-1}\Delta_g
=
\left(I - V_{g,r} D_g V_{g,r}^\top\right)\Delta_g,
~D_g
=
\mathrm{diag}\!\left(
\frac{\lambda_g \sigma_{g,1}^2}{1+\lambda_g \sigma_{g,1}^2},
\dots,
\frac{\lambda_g \sigma_{g,r}^2}{1+\lambda_g \sigma_{g,r}^2}
\right).
\label{eq:closed_form}
\end{equation}
\end{proposition}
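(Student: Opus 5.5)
We argue directly from the structure of the objective in \eqref{eq:main_objective}. Write $J(T)=\|T-\Delta_g\|_F^2+\lambda_g\|\hat X_g T\|_F^2$ and expand it as a quadratic in $T\in\mathbb{R}^{d_g\times m_g}$:
\[
J(T)=\mathrm{tr}\!\left(T^\top (I+\lambda_g \hat C_g) T\right)-2\,\mathrm{tr}(T^\top\Delta_g)+\|\Delta_g\|_F^2 ,
\]
using $\|\hat X_g T\|_F^2=\mathrm{tr}(T^\top \hat X_g^\top\hat X_g T)=\mathrm{tr}(T^\top \hat C_g T)$ with $\hat C_g$ from \eqref{eq:denoised_geometry}.

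\textbf{Step 1: strong convexity.} The Hessian of $J$, viewed as an operator on matrices, is $T\mapsto 2(I+\lambda_g\hat C_g)T$. Since $\hat C_g=V_{g,r}\Sigma_{g,r}^2V_{g,r}^\top\succeq 0$ and $\lambda_g\ge 0$, we have $I+\lambda_g\hat C_g\succeq I$. Hence $J(T)-\|T\|_F^2$ is convex, so $J$ is $2$-strongly convex; this already holds at $\lambda_g=0$. Strong convexity together with continuity gives existence and uniqueness of the minimizer.

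\textbf{Step 2: first-order condition.} Setting the gradient to zero gives $2(T-\Delta_g)+2\lambda_g\hat C_gT=0$, that is, $(I+\lambda_g\hat C_g)T=\Delta_g$. Because $I+\lambda_g\hat C_g$ is positive definite, it is invertible, so
\[
\widetilde\Delta_g=(I+\lambda_g\hat C_g)^{-1}\Delta_g .
\]

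\textbf{Step 3: spectral form.} Complete $V_{g,r}$ to an orthonormal basis $[V_{g,r},V_\perp]$ of $\mathbb{R}^{d_g}$. In this basis,
\[
I+\lambda_g\hat C_g=V_{g,r}(I_r+\lambda_g\Sigma_{g,r}^2)V_{g,r}^\top+V_\perp V_\perp^\top ,
\]
so its inverse is $V_{g,r}(I_r+\lambda_g\Sigma_{g,r}^2)^{-1}V_{g,r}^\top+V_\perp V_\perp^\top$. Substitute $V_\perp V_\perp^\top=I-V_{g,r}V_{g,r}^\top$ and use the scalar identity $\frac{1}{1+\lambda\sigma^2}=1-\frac{\lambda\sigma^2}{1+\lambda\sigma^2}$ entrywise. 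This yields $I-V_{g,r}D_gV_{g,r}^\top$ with $D_g$ exactly as in \eqref{eq:closed_form}.

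As a cross-check on Step 3, one can alternatively apply the Woodbury identity to $(I+\lambda_g V_{g,r}\Sigma_{g,r}^2V_{g,r}^\top)^{-1}$, which uses $V_{g,r}^\top V_{g,r}=I_r$.

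There is no real obstacle here. The only care points are:
\begin{itemize}
\item the transpose conventions, since $\Delta_g\in\mathbb{R}^{d_g\times m_g}$ is left-multiplied by $\hat X_g$, so $\hat C_g$ acts on the input side;
\item the fact that $V_{g,r}$ has orthonormal columns but is not square. This is why the complement $V_\perp$ is needed in Step 3, and why no actual inverse of $V_{g,r}$ appears.
\end{itemize}
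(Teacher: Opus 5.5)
Your proof is correct and follows the same route as the paper's sketch. Both arguments first show that the Hessian $2(I+\lambda_g\hat C_g)$ is positive definite, which gives strong convexity, and as you note this already holds at $\lambda_g=0$. Both then solve the first-order condition $(I+\lambda_g\hat C_g)T=\Delta_g$ and rewrite the inverse as $I-V_{g,r}D_gV_{g,r}^\top$.

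The only difference is in that last rewriting. You complete $V_{g,r}$ to an orthonormal basis, whereas the paper invokes the Woodbury identity. The two computations are equivalent and rest on the same fact, $V_{g,r}^\top V_{g,r}=I_r$.
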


\noindent\textit{Proof sketch.}
~\eqref{eq:main_objective} is a strictly convex quadratic in \(T\), with Hessian
\(2(I+\lambda_g \hat C_g)\otimes I_{m_g}\).
Since \(\hat C_g \succeq 0\), the Hessian is positive definite. Setting the gradient to zero and applying Woodbury matrix identity yields ~\eqref{eq:closed_form}.

This closed-form solution reveals that SAGE acts as a soft spectral sanitizer on the source update. Along each \(v_{g,i}\) retained principal directions, the source update is shrunk by \(\frac{1}{1+\lambda_g \sigma_{g,i}^2}\), which is determined by the retained singular energy, and directions orthogonal to \(\mathrm{span}(V_{g,r})\) is left unchanged. So directions with larger retained energy are attenuated more strongly, and components outside the dominant retained subspace are largely preserved.

As a result, SAGE is not a hard null-space projection that indiscriminately removes all retained-subspace components as \(\lambda_g \to \infty\). Instead, it performs a continuous, geometry-aware shrinkage that suppresses the most retain-sensitive directions while preserving as much of the source forgetting carrier as possible. Also, this operator does not amplify the retained-geometry response measured on the calibration proxy, reducing the retain-side disturbance.

\paragraph{Final Composition and Forget-Matched Calibration}
As the magnitude of sanitized unlearning vectors controls the strength of forgetting~\citep{ilharco2023taskarithmetic}, we apply an amplifier and form the final model as
\begin{equation}
W_g^\star=W_{0,g}+\alpha\,\widetilde{\Delta}_g,
\end{equation}
where \(\alpha\) is a scaling coefficient selected by grid search on a disjoint forget-side calibration subset.

\paragraph{Mechanistic Discussion: Differential Suppression.}
To further interpret when post-hoc sanitization is beneficial, we compare the relative suppression induced by SAGE on retain and forget activations. For module \(g\), define
\begin{equation}
S_g^{(r)} := 1 - \frac{\left\|X_g^{(r)} \widetilde{\Delta}_g\right\|_F^2}{\left\|X_g^{(r)} \Delta_g\right\|_F^2},
\qquad
S_g^{(f)} := 1 - \frac{\left\|X_g^{(f)} \widetilde{\Delta}_g\right\|_F^2}{\left\|X_g^{(f)} \Delta_g\right\|_F^2},
\end{equation}
and let
\begin{equation}
\Gamma_g := S_g^{(r)} - S_g^{(f)}.
\end{equation}
A positive \(\Gamma_g\) indicates that SAGE suppresses retain-side response more strongly than forget-side response, which helps explain why, after forget-matched calibration, the sanitized update can improve retention while preserving forgetting behavior. We study this quantity empirically in Section ~\ref{sec:sd_analysis}.

\section{Experiments}
\label{sec:experiments}

\subsection{Experiment Setups}

\begin{table*}[t]
\centering
\small
\definecolor{sageblue}{RGB}{235,243,248}
\definecolor{sagesup}{RGB}{230,138,30}
\newcommand{\gain}[1]{\textsuperscript{\textcolor{sagesup}{\bfseries$\boldsymbol{\uparrow}\mkern-10mu$#1}}}
\newcommand{\loss}[1]{\textsuperscript{\textcolor{sagesup}{\bfseries-#1}}}
\setlength{\tabcolsep}{3.5pt}
\renewcommand{\arraystretch}{1.08}
\caption{Main TOFU results across Llama-3-1B,3B,8B on Forget-1\%,5\%,10\% split, with retain/unlearn Extraction strength (ES), absolute privacy leak ($|\mathrm{Priv.\ Leak}|$), and model utility (MU). Better results with SAGE are highlighted in \textbf{bold}, and retain-capability gains are shown in \textbf{\textcolor{sagesup}{orange}}.}
\label{tab:main_tofu}
\begin{adjustbox}{max width=\textwidth}
\begin{tabular}{lcccc!{\vrule width 0.6pt}cccc!{\vrule width 0.6pt}cccc}
\toprule
\multirow{2}{*}{Method} & \multicolumn{4}{c}{\textbf{Forget-1\%}} & \multicolumn{4}{c}{\textbf{Forget-5\%}} & \multicolumn{4}{c}{\textbf{Forget-10\%}} \\
\cmidrule(r{3pt}){2-5}\cmidrule(l{3pt}r{3pt}){6-9}\cmidrule(l{3pt}){10-13}
& ES Re.\,$\uparrow$ & ES Un.\,$\downarrow$ & $|\mathrm{Priv.\ Leak}|$\,$\downarrow$ & MU\,$\uparrow$ & ES Re.\,$\uparrow$ & ES Un.\,$\downarrow$ & $|\mathrm{Priv.\ Leak}|$\,$\downarrow$ & MU\,$\uparrow$ & ES Re.\,$\uparrow$ & ES Un.\,$\downarrow$ & $|\mathrm{Priv.\ Leak}|$\,$\downarrow$ & MU\,$\uparrow$ \\
\midrule
\multicolumn{13}{c}{\textbf{Llama-3.2-1B-Instruct}} \\
\midrule
Vanilla & 0.055 & 0.058 & 7.79 & 0.281 & 0.055 & 0.059 & 8.09 & 0.281 & 0.055 & 0.055 & 10.43 & 0.281 \\
Fully Fine-tuned & 0.736 & 0.743 & 100.00 & 0.598 & 0.736 & 0.730 & 99.99 & 0.598 & 0.736 & 0.712 & 99.46 & 0.598 \\
\midrule
NPO & 0.583 & 0.150 & 76.62 & 0.588 & 0.131 & 0.071 & \textbf{12.78} & 0.437 & 0.205 & 0.075 & \textbf{1.99} & 0.528 \\
\rowcolor{sageblue} NPO w. SAGE & \textbf{0.630\gain{0.047}} & \textbf{0.144} & \textbf{75.09} & \textbf{0.593} & \textbf{0.180\gain{0.049}} & \textbf{0.065} & 26.70 & \textbf{0.495} & \textbf{0.267\gain{0.062}} & \textbf{0.072} & 14.05 & \textbf{0.550}\\
SimNPO & 0.672 & 0.415 & \textbf{98.11} & 0.593 & 0.580 & 0.211 & 96.77 & 0.578 & 0.637 & 0.155 & 95.12 & 0.586 \\
\rowcolor{sageblue} SimNPO w. SAGE & \textbf{0.706\gain{0.034}} & \textbf{0.402} & 98.58 & \textbf{0.596} & \textbf{0.656\gain{0.076}} & \textbf{0.201} & \textbf{95.94} & \textbf{0.590} & \textbf{0.647\gain{0.010}} & \textbf{0.148} & \textbf{92.48} & \textbf{0.593}\\
RMU & 0.216 & 0.032 & \textbf{46.28} & 0.526 & 0.663 & 0.033 & \textbf{49.24} & 0.585 & 0.707 & 0.033 & 58.81 & 0.592 \\
\rowcolor{sageblue} RMU w. SAGE & \textbf{0.691\gain{0.475}} & \textbf{0.031} & 84.53 & \textbf{0.593} & \textbf{0.707\gain{0.043}} & \textbf{0.033} & 51.12 & \textbf{0.592} & \textbf{0.709\gain{0.002}} & \textbf{0.033} & \textbf{55.37} & \textbf{0.595}\\
UNDIAL & 0.484 & 0.091 & 86.42 & 0.582 & 0.232 & 0.053 & 91.46 & 0.556 & 0.268 & 0.044 & 93.50 & 0.563 \\
\rowcolor{sageblue} UNDIAL w. SAGE & \textbf{0.664\gain{0.180}} & \textbf{0.080} & \textbf{75.80} & \textbf{0.597} & \textbf{0.660\gain{0.428}} & \textbf{0.052} & \textbf{40.60} & \textbf{0.596} & \textbf{0.625\gain{0.358}} & \textbf{0.044} & \textbf{49.35} & \textbf{0.597}\\
SatImp & 0.677 & 0.206 & 95.99 & 0.596 & 0.413 & 0.069 & 76.26 & 0.572 & 0.358 & 0.059 & 72.00 & 0.552 \\
\rowcolor{sageblue} SatImp w. SAGE & \textbf{0.700\gain{0.023}} & \textbf{0.198} & \textbf{95.63} & \textbf{0.600} & \textbf{0.580\gain{0.167}} & \textbf{0.065} & \textbf{67.41} & \textbf{0.584} & \textbf{0.453\gain{0.095}} & \textbf{0.057} & \textbf{58.35} & \textbf{0.569}\\

WGA & 0.683 & 0.141 & \textbf{86.78} & 0.598 & 0.555 & 0.034 & \textbf{53.62} & \textbf{0.592} & 0.627 & 0.033 & 60.71 & 0.590 \\
\rowcolor{sageblue} WGA w. SAGE & \textbf{0.721\gain{0.038}} & \textbf{0.124} & 88.90 & \textbf{0.601} & \textbf{0.663\gain{0.107}} & \textbf{0.034} & 54.40 & 0.591 & \textbf{0.734\gain{0.108}} & \textbf{0.033} & \textbf{58.76} & \textbf{0.600}\\
\midrule
\multicolumn{13}{c}{\textbf{Llama-3.2-3B-Instruct}} \\
\midrule
Vanilla & 0.063 & 0.055 & 18.64 & 0.272 & 0.063 & 0.055 & 11.51 & 0.272 & 0.063 & 0.053 & 13.76 & 0.272 \\
Fully Fine-tuned & 0.885 & 0.920 & 100.00 & 0.665 & 0.885 & 0.887 & 100.00 & 0.665 & 0.885 & 0.888 & 99.72 & 0.665 \\
\midrule
NPO & 0.760 & 0.201 & 80.93 & 0.667 & 0.140 & 0.060 & \textbf{20.52} & 0.466 & 0.132 & 0.060 & \textbf{16.62} & 0.529 \\
\rowcolor{sageblue} NPO w. SAGE & \textbf{0.821\gain{0.062}} & \textbf{0.188} & \textbf{77.68} & \textbf{0.669} & \textbf{0.543\gain{0.403}} & \textbf{0.057} & 30.72 & \textbf{0.648} & \textbf{0.521\gain{0.389}} & \textbf{0.060} & 21.52 & \textbf{0.649}\\
SimNPO & 0.839 & 0.562 & \textbf{99.86} & 0.652 & 0.624 & 0.264 & 97.99 & 0.640 & 0.583 & 0.212 & 96.76 & \textbf{0.646} \\
\rowcolor{sageblue} SimNPO w. SAGE & \textbf{0.856\gain{0.017}} & \textbf{0.562} & \textbf{99.86} & \textbf{0.653} & \textbf{0.712\gain{0.088}} & \textbf{0.256} & \textbf{96.12} & \textbf{0.652} & \textbf{0.625\gain{0.042}} & \textbf{0.206} & \textbf{93.99} & 0.645\\
RMU & 0.320 & 0.038 & \textbf{25.00} & 0.612 & 0.830 & 0.033 & 54.30 & \textbf{0.669} & \textbf{0.859} & 0.033 & 62.83 & \textbf{0.675} \\
\rowcolor{sageblue} RMU w. SAGE & \textbf{0.817\gain{0.498}} & \textbf{0.037} & 70.90 & \textbf{0.665} & \textbf{0.860\gain{0.030}} & \textbf{0.033} & \textbf{49.85} & 0.668 & 0.854 & \textbf{0.033} & \textbf{61.06} & 0.666\\
UNDIAL & 0.750 & 0.084 & 91.24 & 0.676 & 0.304 & 0.051 & 84.27 & 0.641 & 0.357 & 0.041 & 89.69 & 0.656 \\
\rowcolor{sageblue} UNDIAL w. SAGE & \textbf{0.831\gain{0.081}} & \textbf{0.076} & \textbf{84.75} & \textbf{0.679} & \textbf{0.805\gain{0.501}} & \textbf{0.048} & \textbf{64.28} & \textbf{0.680} & \textbf{0.750\gain{0.393}} & \textbf{0.041} & \textbf{34.68} & \textbf{0.678}\\
SatImp & 0.847 & 0.348 & \textbf{96.05} & 0.659 & 0.537 & 0.091 & 55.16 & 0.614 & 0.462 & 0.048 & 38.29 & 0.614 \\
\rowcolor{sageblue} SatImp w. SAGE & \textbf{0.861\gain{0.014}} & \textbf{0.322} & 96.19 & \textbf{0.661} & \textbf{0.714\gain{0.178}} & \textbf{0.087} & \textbf{53.12} & \textbf{0.648} & \textbf{0.598\gain{0.136}} & \textbf{0.048} & \textbf{36.91} & \textbf{0.641}\\

WGA & 0.841 & 0.206 & 88.28 & 0.663 & 0.623 & 0.033 & \textbf{53.24} & 0.641 & 0.633 & 0.036 & \textbf{58.91} & 0.648 \\
\rowcolor{sageblue} WGA w. SAGE & \textbf{0.864\gain{0.023}} & \textbf{0.200} & \textbf{86.58} & \textbf{0.665} & \textbf{0.730\gain{0.107}} & \textbf{0.033} & 53.65 & \textbf{0.650} & \textbf{0.744\gain{0.111}} & \textbf{0.035} & 59.11 & \textbf{0.652}\\
\midrule
\multicolumn{13}{c}{\textbf{Llama-3.1-8B-Instruct}} \\
\midrule
Vanilla & 0.062 & 0.062 & 3.38 & 0.275 & 0.062 & 0.060 & 11.67 & 0.275 & 0.062 & 0.056 & 11.48 & 0.275 \\
Fully Fine-tuned & 0.992 & 0.977 & 100.00 & 0.627 & 0.992 & 0.972 & 100.00 & 0.627 & 0.992 & 0.979 & 99.94 & 0.627 \\
\midrule
NPO & 0.866 & 0.152 & 68.25 & \textbf{0.642} & 0.176 & 0.064 & \textbf{39.71} & 0.556 & 0.233 & 0.071 & 40.45 & 0.601 \\
\rowcolor{sageblue} NPO w. SAGE & \textbf{0.956\gain{0.090}} & \textbf{0.147} & \textbf{64.37} & 0.639 & \textbf{0.601\gain{0.424}} & \textbf{0.064} & 41.15 & \textbf{0.609} & \textbf{0.543\gain{0.310}} & \textbf{0.067} & \textbf{39.30} & \textbf{0.613}\\
SimNPO & 0.939 & 0.566 & \textbf{97.50} & 0.615 & 0.699 & 0.303 & 97.66 & \textbf{0.619} & 0.568 & 0.230 & 97.00 & 0.602 \\
\rowcolor{sageblue} SimNPO w. SAGE & \textbf{0.967\gain{0.029}} & \textbf{0.566} & 98.37 & \textbf{0.619} & \textbf{0.833\gain{0.134}} & \textbf{0.276} & \textbf{94.10} & 0.614 & \textbf{0.680\gain{0.112}} & \textbf{0.223} & \textbf{94.30} & \textbf{0.604}\\
RMU & 0.833 & 0.074 & 55.37 & \textbf{0.635} & \textbf{0.985} & 0.039 & \textbf{50.85} & \textbf{0.657} & \textbf{0.986} & 0.033 & \textbf{59.52} & \textbf{0.652} \\
\rowcolor{sageblue} RMU w. SAGE & \textbf{0.986\gain{0.152}} & \textbf{0.061} & \textbf{3.13} & 0.630 & 0.981 & \textbf{0.037} & 51.22 & 0.626 & 0.979 & \textbf{0.033} & 60.45 & 0.626\\
UNDIAL & 0.819 & 0.112 & 85.75 & \textbf{0.688} & 0.508 & 0.051 & 79.38 & \textbf{0.690} & 0.631 & 0.051 & 92.15 & \textbf{0.689} \\
\rowcolor{sageblue} UNDIAL w. SAGE & \textbf{0.976\gain{0.157}} & \textbf{0.109} & \textbf{84.87} & 0.669 & \textbf{0.955\gain{0.447}} & \textbf{0.051} & \textbf{48.78} & 0.651 & \textbf{0.932\gain{0.301}} & \textbf{0.050} & \textbf{55.31} & 0.644\\
SatImp & 0.950 & 0.461 & \textbf{94.62} & 0.618 & 0.690 & 0.117 & 84.81 & \textbf{0.627} & 0.578 & 0.046 & \textbf{5.66} & \textbf{0.596} \\
\rowcolor{sageblue} SatImp w. SAGE & \textbf{0.972\gain{0.022}} & \textbf{0.461} & 96.25 & \textbf{0.622} & \textbf{0.852\gain{0.161}} & \textbf{0.109} & \textbf{77.57} & 0.611 & \textbf{0.736\gain{0.157}} & \textbf{0.045} & 23.62 & 0.596\\
WGA & 0.946 & 0.124 & 63.75 & \textbf{0.639} & 0.614 & 0.033 & 54.85 & 0.599 & 0.537 & 0.033 & 57.99 & 0.593 \\
\rowcolor{sageblue} WGA w. SAGE & \textbf{0.962\gain{0.016}} & \textbf{0.124} & \textbf{61.62} & 0.634 & \textbf{0.792\gain{0.178}} & \textbf{0.033} & \textbf{54.20} & \textbf{0.603} & \textbf{0.809\gain{0.272}} & \textbf{0.033} & \textbf{56.59} & \textbf{0.609}\\
\bottomrule
\end{tabular}
\end{adjustbox}
\end{table*}

\paragraph{Datasets and Baselines.}

We evaluate SAGE on the OpenUnlearning benchmark~\citep{dorna2025openunlearning}, focusing primarily on TOFU~\citep{maini2024TOFU}, a fine-grained benchmark with 4,000 QA pairs for fictitious author profiles. We use the official scaling splits with different forget-set sizes (Forget-1\%, Forget-5\%, and Forget-10\%) and report main results on Llama-3-{1B, 3B, 8B}-Instruct~\citep{grattafiori2024llama3herd}. Besides, we report results on MUSE~\citep{shi2024MUSE}, which evaluates memorization and unlearning of books and news articles through verbatim reproduction, question answering, and membership inference with Llama-2-7B~\citep{touvron2023llama}, and on WMDP~\citep{li2024wmdp}, an alignment-oriented benchmark of 3,668 hazardous-domain (biosecurity, cybersecurity, chemical security) multiple-choice questions with Zephyr-7B~\citep{tunstall2023zephyr}. In all cases, SAGE is applied \emph{post-hoc} to final update vectors produced by a source unlearning method under a unified training budget of 10 epochs, including Gradient Ascent~\citep{maini2024TOFU}, GradDiff~\citep{maini2024TOFU}, NPO~\citep{fan2025SimNPO}, SimNPO~\citep{fan2025SimNPO}, RMU~\citep{li2024wmdp}, UNDIAL~\citep{dong2024UNDIAL}, CEU~\citep{yang2025CEU}, SatImp~\citep{yang2025SatImp}, WGA~\citep{wang2025wga}, DPO~\citep{rafailov2024DPO} and PDU~\citep{entesari2025PDU}.

\paragraph{Evaluation Metrics.}

We report four aspects of unlearning quality: forgetting, retention, privacy, and utility, measured by Unlearn/Retain Extraction Strength~\citep{nicholas2021extracting}, Retain ROUGE~\citep{lin2004rouge}, Privacy Leakage~\citep{shi2024MUSE}, and Model Utility~\citep{maini2024TOFU}, respectively. 
Extraction Strength quantifies the residual recoverability of target information.
Model Utility is a composite measure of overall retained capability. On TOFU, it combines probability, ROUGE, and Truth Ratio evaluations, while on MUSE and WMDP, it is reflected through retained-task generation quality.
For Privacy Leakage, we report the absolute value of the raw leakage score and evaluate methods by its magnitude only; lower values indicate a weaker forget-set membership signal and therefore better privacy protection.

\subsection{Main Results}

\paragraph{Performance on TOFU.}

Table~\ref{tab:main_tofu} shows that SAGE consistently improves retained capabilities with a better forget performance on TOFU across model scales, forget ratios, and source baselines. Averaged over TOFU settings, SAGE increases Retain E.S. from 0.5869 to 0.7409, corresponding to a 26.3\% relative improvement, while slightly reducing Unlearn E.S. from 0.1271 to 0.1227. These results indicate that post-hoc sanitization can substantially recover retained capability without weakening forgetting strength. SAGE also improves model utility from 0.609 to 0.623, and the average absolute privacy leakage decreases from 68.29 to 64.06, suggesting that suppressing retain-sensitive components in the final vectors can also mitigate collateral utility and privacy degradation.

The improvement is consistent across both model scale and forget difficulty. Across the 1B, 3B, and 8B models, the average Retain E.S. gains are 26.5\%, 29.4\%, and 23.5\%, respectively, showing that the benefit of SAGE persists from small to larger backbones. Across forget ratios, the average absolute Retain E.S. improvement is +0.109 on Forget-1\%, +0.196 on Forget-5\%, and +0.158 on Forget-10\%, indicating that SAGE remains effective under both milder and stronger unlearning regimes. Overall, these results support SAGE as a robust post-hoc correction layer that consistently improves retained capability while preserving or slightly improving forgetting quality.

\begin{table*}[t]
\centering
\small
\definecolor{sageblue}{RGB}{235,243,248}
\definecolor{sagesup}{RGB}{230,138,30}
\newcommand{\gain}[1]{\textsuperscript{\textcolor{sagesup}{\bfseries$\boldsymbol{\uparrow}\mkern-9mu$#1}}}
\newcommand{\drop}[1]{\textsuperscript{\textcolor{sagesup}{\bfseries$\boldsymbol{\downarrow}\mkern-9mu$#1}}}
\setlength{\tabcolsep}{4.2pt}
\renewcommand{\arraystretch}{1.10}
\caption{MUSE results on \textbf{Llama-2-7b-hf} with unlearn extraction strength (ES Un.), retain ROUGE (ROUGE Re.), and absolute privacy leakage ($|\mathrm{Priv.\ Leak}|$). WMDP results on \textbf{zephyr-7b-beta} are reported with unlearn accuracy (Un. acc.) and MMLU accuracy (MMLU acc.). Better results with SAGE are highlighted in \textbf{bold}, and retain-capability gains are shown in \textbf{\textcolor{sagesup}{orange}}.}
\label{tab:muse_wmdp_results}

\begin{adjustbox}{max width=\textwidth}
\begin{tabular}{lccc!{\vrule width 0.6pt}ccc!{\vrule width 0.6pt}cc}
\toprule
\multirow{2}{*}{\textbf{Method}}
& \multicolumn{3}{c}{\textbf{MUSE Books}}
& \multicolumn{3}{c}{\textbf{MUSE News}}
& \multicolumn{2}{c}{\textbf{WMDP cyber}} \\
\cmidrule(r{3pt}){2-4}
\cmidrule(l{3pt}r{3pt}){5-7}
\cmidrule(l{3pt}){8-9}
& ES Un.\,$\downarrow$ & ROUGE Re.\,$\uparrow$ & $|\mathrm{Priv.\ Leak}|$\,$\downarrow$
& ES Un.\,$\downarrow$ & ROUGE Re.\,$\uparrow$ & $|\mathrm{Priv.\ Leak}|$\,$\downarrow$
& Un. acc.\,$\downarrow$ & MMLU acc.\,$\uparrow$ \\

\cmidrule(r{3pt}){1-7}
\cmidrule(l{3pt}){8-9}
\multicolumn{1}{c}{} &
\multicolumn{6}{c!{\vrule width 0.6pt}}{\textbf{Llama-2-7b-hf}} &
\multicolumn{2}{c}{\textbf{zephyr-7b-beta}} \\
\cmidrule(r{3pt}){1-7}
\cmidrule(l{3pt}){8-9}

Vanilla
& 0.010 & 0.680 & 8.16
& 0.020 & 0.560 & 4.72
& 0.445 & 0.585 \\
Fully Fine-tuned
& 0.920 & 0.690 & 57.34
& 0.290 & 0.550 & 99.81
& -- & -- \\
\midrule


GradDiff
& 0.008 & 0.052 & 28.62
& 0.071 & 0.487 & 98.68
& 0.245 & 0.536 \\
\rowcolor{sageblue} GradDiff w. SAGE
& \textbf{0.008} & \textbf{0.610\gain{0.558}} & 32.54
& \textbf{0.070} & \textbf{0.496\gain{0.009}} & \textbf{97.08}
& \textbf{0.244} & \textbf{0.561\gain{0.025}} \\


SimNPO
& 0.138 & 0.537 & 54.81
& 0.214 & 0.486 & \textbf{99.87}
& 0.418 & 0.572 \\
\rowcolor{sageblue} SimNPO w. SAGE
& \textbf{0.106} & \textbf{0.611\gain{0.074}} & \textbf{54.66}
& \textbf{0.212} & \textbf{0.532\gain{0.047}} & \textbf{99.87}
& 0.426 & \textbf{0.587\gain{0.015}} \\

RMU
& 0.008 & 0.124 & \textbf{19.91}
& 0.021 & 0.482 & \textbf{25.59}
& \textbf{0.261} & 0.511 \\
\rowcolor{sageblue} RMU w. SAGE
& \textbf{0.008} & \textbf{0.336\gain{0.212}} & 38.76
& \textbf{0.019} & \textbf{0.496\gain{0.014}} & 30.90
& 0.279 & \textbf{0.570\gain{0.059}} \\

UNDIAL
& 0.024 & 0.627 & 18.45
& 0.013 & 0.189 & 99.45
& \textbf{0.390} & 0.565 \\
\rowcolor{sageblue} UNDIAL w. SAGE
& \textbf{0.024} & \textbf{0.700\gain{0.073}} & \textbf{18.05}
& \textbf{0.013} & \textbf{0.264\gain{0.075}} & \textbf{89.86}
& 0.397 & \textbf{0.584\gain{0.019}} \\



WGA
& 0.008 & 0.486 & 43.82
& 0.012 & 0.436 & 103.70
& \textbf{0.348} & 0.546 \\
\rowcolor{sageblue} WGA w. SAGE
& \textbf{0.008} & \textbf{0.619\gain{0.133}} & \textbf{25.55}
& \textbf{0.011} & \textbf{0.468\gain{0.032}} & \textbf{102.60}
& \textbf{0.348} & \textbf{0.570\gain{0.024}} \\

PDU
& \textbf{0.008} & 0.042 & \textbf{54.48}
& 0.146 & \textbf{0.501} & \textbf{99.79}
& \textbf{0.243} & 0.243 \\
\rowcolor{sageblue} PDU w. SAGE
& \textbf{0.008} & \textbf{0.605\gain{0.563}} & 71.21
& \textbf{0.141} & 0.484 & 99.81
& 0.245 & \textbf{0.255\gain{0.012}} \\

\bottomrule
\end{tabular}
\end{adjustbox}
\end{table*}

\paragraph{Performance on MUSE and WMDP.}
MUSE involves longer-form Books and News content with open-ended generation behavior, while WMDP-cyber evaluates hazardous knowledge removal together with general capability preservation. 
On MUSE, SAGE improves retention capabilities while preserving forgetting: averaged over Books and News, ROUGE Re. increases from 0.371 to 0.518, ES Un. slightly decreases from 0.056 to 0.052. 
On WMDP-cyber, SAGE mainly improves the utility side of the safety--utility trade-off, increasing average MMLU accuracy from 0.496 to 0.521 across source baselines at comparable unlearning accuracy.
These results show that SAGE remains effective on longer-form and safety-oriented unlearning tasks.

\subsection{Ablation Study}

\begin{table*}[t]
\centering
\scriptsize
\definecolor{sageblue}{RGB}{235,243,248}
\newcommand{\best}[1]{\textbf{#1}}
\newcommand{\second}[1]{\underline{#1}}
\newcommand{\sagecell}[1]{\cellcolor{sageblue}#1}
\setlength{\tabcolsep}{2.8pt}
\renewcommand{\arraystretch}{1.03}
\caption{Hyperparameter ablations of SAGE on \textbf{Llama-3-1B} under TOFU Forget-10\% Split. Default parameters are shaded in blue. Best and second-best results are highlighted in \textbf{bold} and \underline{underlined}.}
\label{tab:ablation_compact}
\begin{adjustbox}{max width=\textwidth,center}
\begin{tabular}{lccccccccc}
\toprule
\multirow{2}{*}{\textbf{Ablation}} 
& \multirow{2}{*}{\textbf{Setting}} 
& \multicolumn{4}{c}{\textbf{WGA w. SAGE}} 
& \multicolumn{4}{c}{\textbf{RMU w. SAGE}} \\
\cmidrule(lr){3-6} \cmidrule(lr){7-10}
& & ES Re.\,$\uparrow$ & ES Un.\,$\downarrow$ & $|\mathrm{Priv.\ Leak}|$\,$\downarrow$ & MU\,$\uparrow$
   & ES Re.\,$\uparrow$ & ES Un.\,$\downarrow$ & $|\mathrm{Priv.\ Leak}|$\,$\downarrow$ & MU\,$\uparrow$ \\
\midrule

\multirow{5}{*}{SVD Trunc.\ $\rho$} 
& 0.5
& 0.677 & 0.033 & 59.49 & 0.594
& 0.679 & 0.033 & 60.01 & 0.590 \\
& 0.7
& 0.699 & 0.033 & 59.24 & 0.596
& 0.696 & 0.033 & 59.89 & 0.591 \\
& \sagecell{\textbf{0.9}}
& \sagecell{\second{0.734}} & \sagecell{0.033} & \sagecell{\best{58.76}} & \sagecell{\second{0.600}}
& \sagecell{0.709} & \sagecell{0.033} & \sagecell{\best{55.37}} & \sagecell{\second{0.595}} \\
& 0.95
& \best{0.748} & 0.033 & \second{58.40} & \best{0.603}
& \second{0.716} & 0.033 & \second{55.48} & \best{0.595} \\
& 1.0
& 0.607 & 0.033 & 60.51 & 0.591
& \best{0.726} & 0.033 & 58.53 & 0.594 \\
\midrule

\multirow{4}{*}{Retain-proxy Size} 
& 16
& 0.713 & 0.033 & 59.44 & 0.596
& 0.699 & 0.033 & 59.06 & 0.592 \\
& 64
& 0.724 & 0.033 & 58.84 & 0.599
& 0.705 & 0.033 & 55.88 & \second{0.593} \\
& \sagecell{\textbf{128}}
& \sagecell{\second{0.734}} & \sagecell{0.033} & \sagecell{\second{58.76}} & \sagecell{\second{0.600}}
& \sagecell{\second{0.709}} & \sagecell{0.033} & \sagecell{\second{55.37}} & \sagecell{\best{0.595}} \\
& 256
& \best{0.764} & 0.033 & \best{58.37} & \best{0.603}
& \best{0.716} & 0.033 & \best{55.30} & \best{0.595} \\
\midrule

\multirow{6}{*}{Regularization $\lambda$} 
& 0.0
& 0.629 & 0.033 & 60.66 & 0.592
& 0.706 & 0.033 & 58.86 & 0.592 \\
& 0.01
& 0.722 & 0.033 & 58.94 & 0.599
& 0.704 & 0.033 & 56.15 & 0.592 \\
& 0.1
& 0.730 & 0.033 & 58.97 & \best{0.603}
& \best{0.712} & 0.033 & \best{55.16} & \second{0.594} \\
& 1.0
& \second{0.734} & 0.033 & 58.78 & \second{0.601}
& 0.707 & 0.033 & 55.57 & 0.592 \\
& \sagecell{\textbf{5.0}}
& \sagecell{\second{0.734}} & \sagecell{0.033} & \sagecell{\best{58.76}} & \sagecell{0.600}
& \sagecell{0.709} & \sagecell{0.033} & \sagecell{\second{55.37}} & \sagecell{\best{0.595}} \\
& 10.0
& \best{0.736} & 0.033 & \second{58.76} & 0.600
& \second{0.709} & 0.033 & 55.39 & 0.594 \\
\midrule

\multirow{5}{*}{Scaling Coefficient $\alpha$} 
& 0.8
& \best{0.781} & 0.046 & \best{5.23} & \best{0.602}
& \best{0.719} & 0.041 & \second{54.42} & \best{0.595} \\
& 1.0
& \second{0.769} & \second{0.037} & \second{38.07} & \second{0.601}
& \second{0.709} & \second{0.033} & 55.37 & \second{0.595} \\
& 1.5
& 0.746 & \best{0.033} & 58.10 & 0.601
& 0.694 & \best{0.033} & 59.01 & 0.588 \\
& 3.0
& 0.649 & \best{0.033} & 55.64 & 0.596
& 0.591 & \best{0.033} & 55.41 & 0.571 \\
& 6.0
& 0.389 & \best{0.033} & 54.52 & 0.553
& 0.370 & \best{0.033} & \best{30.19} & 0.496 \\
\bottomrule
\end{tabular}
\end{adjustbox}
\end{table*}

\paragraph{SVD Trunc. $\rho$.}

As the truncation ratio $\rho$ increases, retain-side performance generally improves because a more dominant retained activation geometry is preserved. However, values too close to full rank can over-constrain sanitization and hurt stability, privacy or utility; overall, $\rho=0.9$ provides the best balance.

\paragraph{Retain-proxy Size.}

SAGE is relatively insensitive to retain-proxy size: even small proxies already recover a useful estimate of the dominant retained geometry and yield consistent gains. We therefore use 128 retained examples by default.

\paragraph{Regularization $\lambda$.}

The regularization strength $\lambda$ controls the trade-off between suppressing retain-sensitive directions and staying close to the unlearning vectors. On TOFU, performance improves quickly from $\lambda=0$ and then saturates, motivating our default choice $\lambda=5$; on MUSE and WMDP, smaller values may work better.

\paragraph{Scaling Coefficient $\alpha$.}

The scaling coefficient $\alpha$ controls how much of the sanitized update is restored before being added back to the base model. As $\alpha$ increases, forgetting and utility often improve, but overly large values can reintroduce retain-side interference and reduce retained performance.

\subsection{Integration with Other Plug-in Methods}

\begin{table*}[t]
\centering
\scriptsize
\definecolor{sageblue}{RGB}{235,243,248}
\definecolor{sagesup}{RGB}{230,138,30}
\newcommand{\gain}[1]{\textsuperscript{\textcolor{sagesup}{\bfseries$\boldsymbol{\uparrow}\mkern-9mu$#1}}}
\newcommand{\drop}[1]{\textsuperscript{\textcolor{sagesup}{\bfseries$\boldsymbol{\downarrow}\mkern-9mu$#1}}}
\newcommand{\best}[1]{\textbf{#1}}
\newcommand{\second}[1]{\underline{#1}}
\setlength{\tabcolsep}{4.4pt}
\renewcommand{\arraystretch}{1.08}
\caption{Results of SAGE with GU and GRU plug-ins for WGA, NPO, and DPO on TOFU over Forget-1\%, 5\%, 10\% using \textbf{Llama-3-1B}. w. SAGE are shaded in blue, the Best and second best results are highlighted in \textbf{bold} and \underline{underline}, and additional gains brought by SAGE are in \textbf{\textcolor{sagesup}{orange}}.}
\label{tab:app:plugin_compositionality_full}

\begin{adjustbox}{max width=\textwidth,center}
\begin{tabular}{l!{\vrule width 0.6pt}cccc!{\vrule width 0.6pt}cccc!{\vrule width 0.6pt}cccc}
\toprule
\multirow{2}{*}{\textbf{Method}}
& \multicolumn{4}{c!{\vrule width 0.6pt}}{\textbf{WGA}}
& \multicolumn{4}{c!{\vrule width 0.6pt}}{\textbf{NPO}}
& \multicolumn{4}{c}{\textbf{DPO}} \\
\cmidrule(lr){2-5}
\cmidrule(lr){6-9}
\cmidrule(lr){10-13}
& ES Re.\,$\uparrow$ & ES Un.\,$\downarrow$ & $|\mathrm{Priv.\ Leak}|$\,$\downarrow$ & MU\,$\uparrow$
& ES Re.\,$\uparrow$ & ES Un.\,$\downarrow$ & $|\mathrm{Priv.\ Leak}|$\,$\downarrow$ & MU\,$\uparrow$
& ES Re.\,$\uparrow$ & ES Un.\,$\downarrow$ & $|\mathrm{Priv.\ Leak}|$\,$\downarrow$ & MU\,$\uparrow$ \\
\midrule

\multicolumn{13}{c}{\textbf{Forget-1\%}} \\
\midrule
Baseline
& \second{0.683} & \best{0.141} & \best{86.78} & 0.598
& 0.583 & 0.150 & 76.62 & 0.588
& 0.575 & 0.369 & 98.94 & 0.575 \\

\rowcolor{sageblue}
w. SAGE
& \best{0.721} & \best{0.124} & \second{88.90} & \second{0.601}
& \second{0.630} & \best{0.144} & \second{75.09} & \second{0.593}
& \second{0.631} & \best{0.327} & \second{97.87} & \second{0.583} \\

w. GU
& 0.677 & 0.179 & 92.80 & \best{0.603}
& 0.548 & 0.178 & 81.35 & 0.590
& 0.577 & 0.392 & 99.06 & 0.576 \\

\rowcolor{sageblue}
w. GU w. SAGE
& 0.691\gain{0.014} & 0.188 & 91.03 & 0.600
& 0.605\gain{0.056} & 0.160 & \best{74.97} & 0.591
& \best{0.634}\gain{0.057} & \second{0.349} & \best{97.76} & \best{0.584} \\

w. GRU
& 0.681 & \best{0.141} & \best{86.78} & 0.599
& 0.584 & 0.150 & 76.86 & 0.588
& 0.574 & 0.362 & 98.94 & 0.575 \\

\rowcolor{sageblue}
w. GRU w. SAGE
& \second{0.715}\gain{0.034} & \second{0.145} & 89.61 & 0.601
& \best{0.635}\gain{0.050} & \second{0.148} & 77.92 & \best{0.593}
& 0.630\gain{0.056} & \best{0.327} & \second{97.87} & 0.583 \\
\midrule

\multicolumn{13}{c}{\textbf{Forget-5\%}} \\
\midrule
Baseline
& 0.555 & 0.034 & \second{53.62} & \best{0.592}
& 0.131 & 0.071 & \second{12.78} & 0.437
& 0.235 & 0.150 & 84.12 & \second{0.017} \\

\rowcolor{sageblue}
w. SAGE
& \best{0.663} & \second{0.034} & 54.40 & \second{0.591}
& \second{0.180} & \best{0.065} & 26.70 & \second{0.495}
& \second{0.349} & \best{0.140} & \second{71.31} & 0.013 \\

w. GU
& 0.504 & 0.037 & \best{49.20} & 0.584
& 0.123 & 0.068 & \best{1.27} & 0.430
& 0.208 & 0.147 & 81.63 & 0.000 \\

\rowcolor{sageblue}
w. GU w. SAGE
& 0.482 & 0.036 & 53.69 & 0.575
& 0.161\gain{0.038} & 0.066 & 21.39 & 0.478
& 0.314\gain{0.105} & \second{0.146} & \best{70.88} & 0.000 \\

w. GRU
& 0.552 & \best{0.034} & 53.74 & 0.589
& 0.131 & 0.069 & 13.59 & 0.441
& 0.235 & 0.150 & 84.17 & \best{0.017} \\

\rowcolor{sageblue}
w. GRU w. SAGE
& \second{0.568}\gain{0.016} & \best{0.034} & 54.61 & 0.585
& \best{0.182}\gain{0.050} & \second{0.066} & 27.52 & \best{0.497}
& \best{0.357}\gain{0.123} & 0.148 & 74.19 & 0.014 \\
\midrule

\multicolumn{13}{c}{\textbf{Forget-10\%}} \\
\midrule
Baseline
& 0.627 & 0.033 & 60.71 & 0.590
& 0.205 & 0.075 & \second{1.99} & 0.528
& 0.316 & 0.192 & 93.95 & \second{0.137} \\

\rowcolor{sageblue}
w. SAGE
& \best{0.734} & 0.033 & \second{58.76} & \best{0.600}
& \best{0.267} & 0.072 & 14.05 & \second{0.550}
& \second{0.451} & 0.191 & 91.13 & 0.125 \\

w. GU
& 0.592 & 0.033 & 60.43 & 0.590
& 0.178 & \second{0.072} & 6.18 & 0.508
& 0.293 & \second{0.189} & 93.53 & 0.132 \\

\rowcolor{sageblue}
w. GU w. SAGE
& 0.651\gain{0.058} & 0.033 & \best{58.40} & 0.593
& 0.225\gain{0.048} & \best{0.071} & 16.95 & 0.540
& 0.395\gain{0.101} & \best{0.177} & \best{90.49} & 0.081 \\

w. GRU
& 0.632 & 0.033 & 60.79 & 0.593
& 0.197 & 0.075 & \best{1.41} & 0.525
& 0.318 & 0.194 & 94.00 & \best{0.137} \\

\rowcolor{sageblue}
w. GRU w. SAGE
& \second{0.681}\gain{0.049} & 0.033 & 59.09 & \second{0.594}
& \second{0.265}\gain{0.068} & 0.072 & 14.89 & \best{0.553}
& \best{0.453}\gain{0.135} & 0.191 & \second{91.10} & 0.123 \\
\bottomrule
\end{tabular}
\end{adjustbox}
\end{table*}

We further study whether SAGE is compatible with training-time rectifiers such as GU and GRU. Overall, the results show that SAGE is complementary to these gradient-level plugins: in most settings, applying SAGE on top of GU/GRU further improves retained capability while largely preserving the original forgetting behavior, and often also brings gains in utility or privacy leakage. This indicates that training-time rectification does not fully remove retain-harmful components from the final update, leaving room for post-hoc sanitization to provide additional benefits.

At the same time, using SAGE alone is generally more effective than using GU or GRU alone. Across different source methods and forget ratios, SAGE more consistently improves the retain--forget trade-off, suggesting that directly sanitizing the final deployed update can be a stronger intervention than only rectifying gradients during training. Taken together, these results show that SAGE is a strong standalone plugin and complementary post-hoc component for existing training-time methods.

\subsection{Suppression Difference Analysis}
\label{sec:sd_analysis}

To examine the mechanism discussed in Section~\ref{sec:method_sanitization}, we analyze the module-wise suppression difference \(\Gamma_g = S_g^{(r)} - S_g^{(f)}\), which measures whether SAGE suppresses retain-side response more strongly than forget-side response. As shown in Figure~\ref{fig:sd_analysis}, the distribution is consistently shifted to the positive side, which indicates that SAGE tends to remove more retain-harming responses than forget-relevant responses at the module level. Such asymmetric suppression helps explain why SAGE can improve retention while preserving forgetting.

\subsection{Computation Resources}

Figure~\ref{fig:compute_resources} reports the computational overhead of SAGE. 
On Llama-3-3B, source unlearning training takes 22--38 minutes and exceeds 100 GiB peak GPU memory across five representative baselines, whereas SAGE takes about 9 minutes on average and uses only 7.86 GiB peak GPU memory. 
Moreover, the projector cache is built once for a given base model and retain proxy and can be reused across multiple source checkpoints, so only the lightweight sanitization step are repeated when applying SAGE on the same model. 
The scaling results show that efficiency advantage persists from 1B to 8B models, making SAGE a practical post-hoc correction under limited compute budgets.

\begin{figure}[t]
    \centering
    \begin{minipage}[t]{0.57\linewidth}
        \centering
        \includegraphics[width=\linewidth]{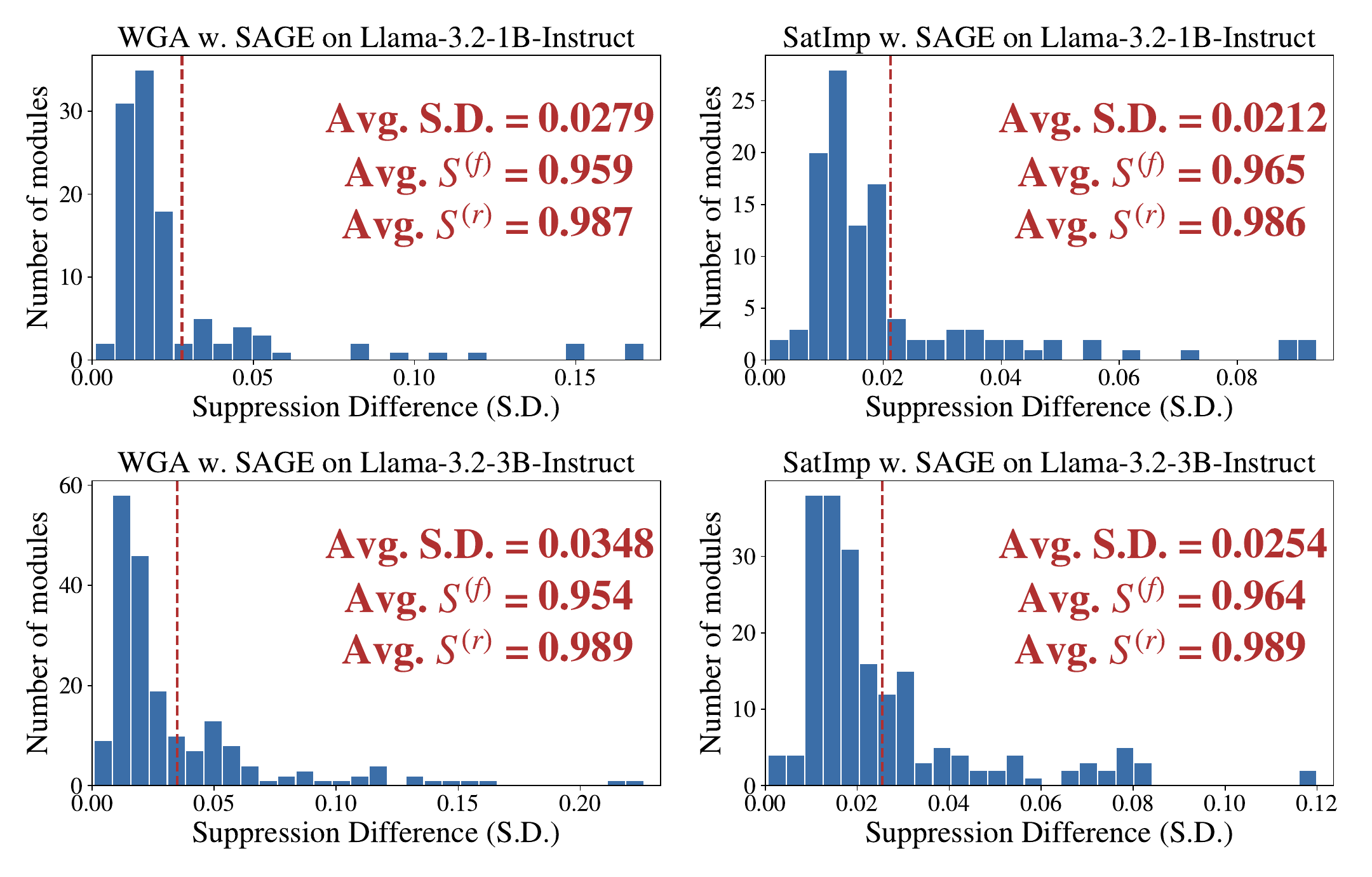}
        \caption{Distribution of suppression difference \\ where the vertical red dashed line denotes the mean.}
        \label{fig:sd_analysis}
    \end{minipage}\hfill
    \begin{minipage}[t]{0.42\linewidth}
        \centering
        \includegraphics[width=\linewidth]{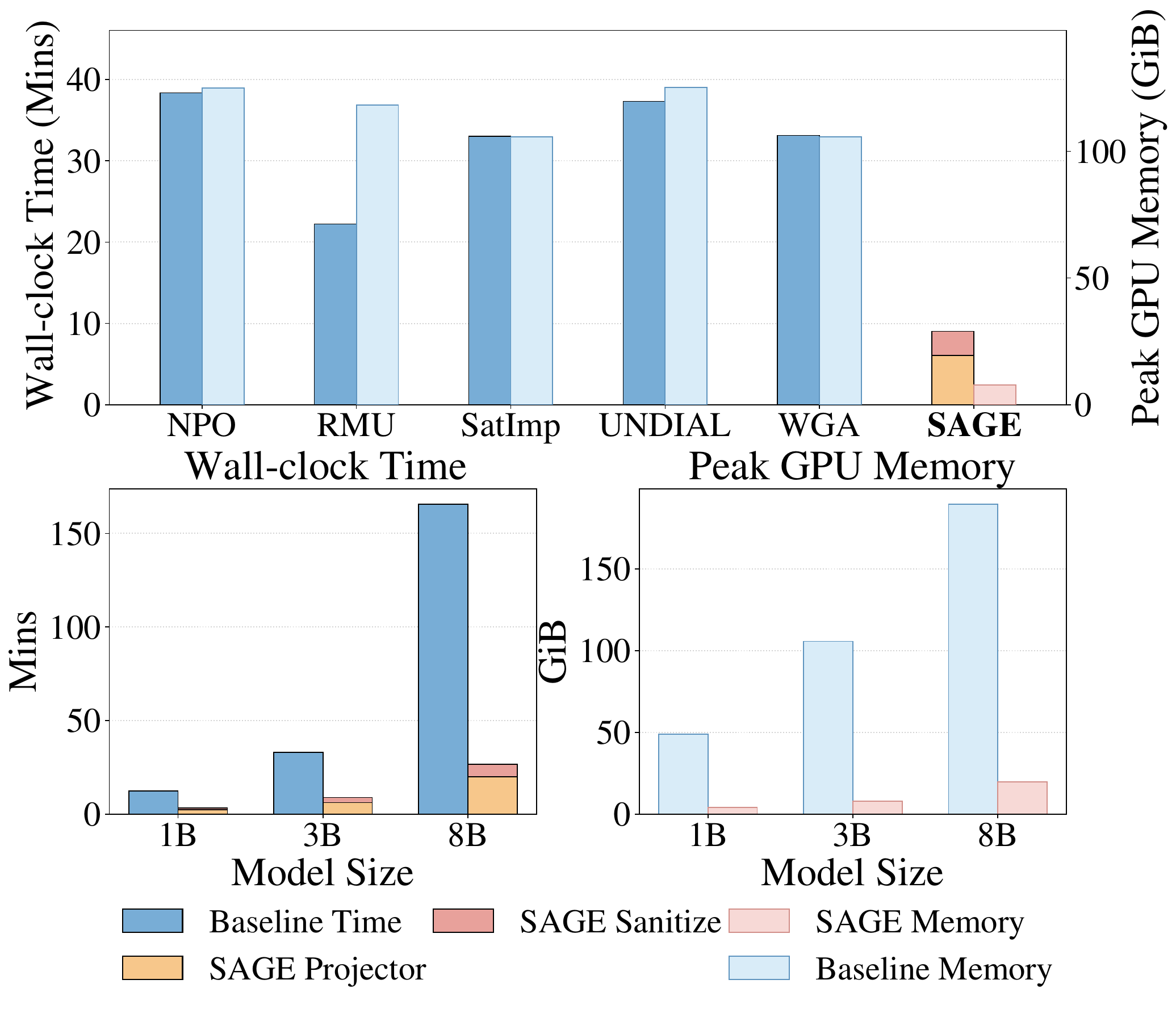}
        \caption{\textbf{Computation Resources} for Baselines and SAGE on different models.}
        \label{fig:compute_resources}
    \end{minipage}
    \vspace{-1.5\baselineskip}
    
\end{figure}

\FloatBarrier

\section{Conclusion}

In this paper, we study a practical post-hoc final-update setting for LLM unlearning. We show that source unlearning updates can still contain retain-harmful components that induce substantial drift in retained activations and degrade retained behavior. 
To address this, we propose \textbf{SAGE} that extracts retained activation geometry from a small retain proxy and applies a closed-form spectral operator to suppress update components aligned with high-energy retained directions while preserving the source method's forgetting carrier. 
Extensive experiments on TOFU, MUSE, and WMDP demonstrate that SAGE consistently improves the retain--forget trade-off, while also improving utility and privacy leakage without retraining. We further show that SAGE is complementary to training-time plug-ins. Overall, our results identify post-hoc sanitization of final unlearning updates as a practical and underexplored design axis for machine unlearning, and suggest that directly correcting the final deployed update can be an effective way to improve unlearning quality.

\newpage

{
    \small
    \bibliographystyle{ieeenat_fullname}
    \bibliography{main}
}

\end{document}